\let\epsilon\varepsilon
\let\phi\varphi
\def\N{\mathbb N}
\def\S{\mathcal S}
\def\R{\mathbb R}
\def\E{{\bf E}}
\def\x{{\mathbf{x}}}
\def\-as{\text{-a.s.}}
\def\argmax{\operatorname{argmax}}
\def\argmin{\operatorname{argmin}}
\newtheorem{theorem}{Theorem}
\newtheorem{definition}{Definition}
\newtheorem{lemma}{Lemma}
\newtheorem{proposition}{Proposition}
\begin{document}
\title{Clustering processes}
\author{Daniil Ryabko \\ {\em INRIA Lille-Nord Europe,} \\ {\tt daniil@ryabko.net}}
\date{}
\maketitle

\begin{abstract} The problem of clustering is considered, for the case when each data point
is a sample generated by a stationary ergodic process. We propose a very natural asymptotic notion of consistency,
and show that simple consistent  algorithms exist, under most general non-parametric assumptions. 
The notion of consistency is as follows: two samples should be put into the same cluster if and only if they were
generated by the same distribution. With this notion of consistency, clustering generalizes such classical
statistical problems as homogeneity testing and process classification.  We show that, for the case of a known number of clusters, consistency can be achieved 
under the only assumption that the joint distribution of the data is stationary ergodic (no parametric or Markovian assumptions, 
no assumptions of independence, neither between nor within the samples). If the number of clusters is unknown, 
consistency can be achieved under appropriate assumptions on the mixing rates of the processes. 
In both cases we give examples of simple (at most quadratic in each argument) algorithms
which are consistent. 
\end{abstract}

\section{Introduction}
Given a finite set of objects, the problem is to ``cluster'' similar objects together. 
This intuitively simple goal is notoriously hard to formalize.
Most of the work on clustering is concerned with particular parametric data generating models, or particular algorithms,
a given  similarity measure, and (very often) a given number of clusters.
It is clear that, as in almost  learning problems, in clustering finding the right similarity measure is an integral 
part of the problem. However, even if one assumes  the similarity 
measure known, it is hard to define what a good clustering is \cite{Kleinberg:02, Zadeh:09}. 
What is more, even if one assumes the similarity measure to be simply the Euclidean distance (on the plane), 
and the number of clusters  $k$ known, then clustering may still appear intractable for computational reasons.
Indeed, in this case finding $k$ centres (points which minimize the cumulative distance from each point in the sample to one of the centres)
 seems to be a natural goal, but this problem is NP-hard \cite{Mahajan:09}.

In this work we concentrate on a subset of the clustering problem: clustering processes. That is, 
each data point is itself a sample generated by a certain discrete-time stochastic process. 
This version of the problem has numerous applications, such as clustering biological data, financial
observations, or behavioural patterns, and as such it has gained a tremendous attention in the literature.

The main observation that we make in this work is that, in the case of clustering processes,  one can 
 benefit from the notion of {\em ergodicity} to define what appears to be a very natural notion of consistency.
This notion of consistency is shown to be satisfied by simple algorithms that we present, which are polynomial in all
arguments. This can be achieved without any modeling assumptions on the data (e.g. Hidden Markov, Gaussian, etc.), without assuming 
 independence of any kind within or between   the samples. 
The only assumption that we make is that the joint distribution of the data is stationary ergodic. 
The assumption of stationarity means, intuitively, that the time index itself bares no information:
it does not matter whether we have started recording observations at time 0 or at time 100. By virtue 
of the ergodic theorem, any stationary process can be represented as a mixture of stationary ergodic processes. 
In other words,  a stationary process can be thought of as first selecting a stationary ergodic
process (according to some prior distribution) and then observing its outcomes. Thus, the assumption 
that the data is stationary ergodic is both very natural and rather weak. At the same time, 
ergodicity means that, in asymptotic, the  properties of the process can be learned from observation.

This allows us to define the clustering problem as follows.  $N$ samples are given: $\x_1=(x^1_1,\dots,x^1_{n_1}),\dots, \x_N=(x^N_1,\dots,x^N_{n_N})$.
Each sample is drawn by one out of $k$ different stationary ergodic distributions. The samples are {\em not} assumed to be drawn 
independently; rather, it is assumed that the joint distribution of the samples is stationary ergodic. The target clustering is as follows:
those and only those samples are put into the same cluster  that were generated by the same distribution. 
The number $k$ of target clusters can be either known or unknown (different consistency results can be obtained in these cases).
A clustering algorithm is called asymptotically consistent if the probability that it outputs the target clustering converges to 1, 
as the lengths ($n_1,\dots,n_N$) of the samples tend to infinity (a variant of this definition is to require the algorithm 
to stabilize on the correct answer with probability 1). Note the particular regime of asymptotic: not with respect
to the number of samples $N$, but with respect to the length of the samples $n_1,\dots,n_N$.

Similar formulations have appeared in the literature before. Perhaps the most close approach is mixture models \cite{Smyth:97, Zhong:03}:
it is assumed that there are $k$ different distributions that have a particular known form (such as Gaussian, Hidden Markov models, 
or graphical models) and each one out of $N$ samples is generated independently according to one of these 
$k$ distributions (with some fixed probability). Since the model of the data is specified quite well, one can use  likelihood-based distances
 (and then, for example, the $k$-means algorithm), or Bayesian
inference, to cluster the data.  Clearly, the main difference from our setting is in that we do not assume any known model of the data; not even between-sample
independence is assumed.

The problem of clustering in our formulation generalizes two classical problems of mathematical statistics.
The first one is homogeneity testing, or the two-sample problem. Two samples $\x_1=(x^1_1,\dots,x^1_{n_1})$ and $\x_2=(x^2_1,\dots,x^2_{n_2})$ are given, 
and it is required to test whether they 
were generated by the same distribution, or by different distributions. This corresponds to clustering 
just two data points ($N=2$) with the number $k$ of clusters unknown: either $k=1$ or $k=2$. 
The second problem is process classification, or the three-sample problem. Three samples $\x_1,\x_2,\x_3$ are given, it is known
that two of them were generated by the same distribution, while the third one was generated by a different distribution.
It is required to find out which two were generated by the same distribution. 
This corresponds
to clustering three data points, with the number of clusters known: $k=2$.
The classical approach is of course to consider Gaussian i.i.d. data, but general non-parametric solutions 
exist not only for i.i.d. data \cite{Lehmann:86}, but also for Markov chains \cite{Gutman:89}, and under certain mixing rates conditions. 
What is important for us here, is that the three-sample problem is easier than the two-sample problem;
the  reason is that $k$ is known in the latter case but not in the former. Indeed, in \cite{Ryabko:10discr} it is shown
that in general, for stationary ergodic (binary-valued) processes, there is no solution to the two-sample problem, 
even in the weakest asymptotic sense. However, a solution to the three-sample problem, for (real-valued) stationary 
ergodic processes was given in \cite{Ryabko:103s}.

In this work we demonstrate that, if the number $k$ of clusters is known, then there is an asymptotically 
consistent clustering algorithm, under the only assumption that the joint distribution of data is stationary ergodic. 
If $k$ is unknown, then in this general case there is no consistent clustering algorithm (as follows from the mentioned
result for the two-sample problem). However, if an upper-bound $\alpha_n$ on the $\alpha$-mixing rates of the joint distribution of the processes
is known, and $\alpha_n\to0$, then there  is a consistent clustering algorithm. 
Both algorithms are rather simple, and are based on the empirical estimates of the so-called distributional distance.
 For two processes $\rho_1, \rho_2$
a distributional distance  $d$ is defined as $\sum_{k=1}^\infty w_k |\rho_1(B_k)-\rho_2(B_k)|$,
where $w_k$ are positive summable real weights, e.g. $w_k=2^{-k}$, and $B_k$ range
over a countable field that generates the sigma-algebra of the underlying probability space.
For example, if we are talking about finite-alphabet processes with the binary alphabet $A=\{0,1\}$, $B_k$ would
range over the set $A^*=\cup_{k\in\N} A^k$; that is, over all tuples $0, 1, 00, 01, 10, 11, 000, 001,\dots$ (of course, we could just as well omit, say, $1$ and $11$); therefore,
the distributional distance in this case is the weighted sum of differences of probabilities of all possible tuples.
In this work we consider real-valued processes,  so $B_k$ have to range through a suitable sequence of intervals, all pairs of such intervals, triples, etc. (see
the formal definitions below). This distance has proved a useful tool for solving various statistical problems 
concerning ergodic processes \cite{Ryabko:103s, Ryabko:101c}.

Although this distance involves infinite summation, we show that its empirical approximations can be easily calculated.
For the case of a known number of clusters, the proposed algorithm (which is shown to be consistent) is as follows.
(The distance in the algorithms is a suitable empirical estimate of $d$.)
The  first sample  is assigned to the first cluster. 
 For each $j=2..k$, find a point that maximizes the minimal distance
to those points already assigned to clusters, and assign it to the cluster $j$.
Thus we have one point in each of the $k$ clusters. Next, assign each of the remaining points
to the cluster that contains the closest points from those $k$ already assigned.
For the case of an unknown number of clusters $k$, the algorithm simply puts those samples together 
that are not farther away from each other than a certain threshold level, where the threshold is calculated based
on the known bound on the mixing rates. In this case, besides the asymptotic result, finite-time bounds on the probability 
of outputting an incorrect clustering can be obtained.
Each of the algorithms is shown to be at most quadratic in each argument.

Therefore, we show that for the proposed notion of consistency, there are simple algorithms that are
consistent under most general assumptions. While these algorithms can be easily implemented, we have 
left the problem of trying them out on particular applications, as well as optimizing the parameters,
for future research. It may also be suggested that the empirical distributional distance can be 
replaced by other distances, for which similar theoretical results can be obtained.
An interesting direction, that could preserve the theoretical generality,  would be to use data compressors. These were used in 
\cite{BRyabko:06a} 
for the related problems of hypotheses testing, 
 leading both to theoretical and practical
results. As far as clustering is concerned, compression-based methods were used (without asymptotic consistency analysis) in \cite{Cilibrasi:05}, and (in a different way)
in \cite{Bagnall:05}.  Combining our consistency framework with these compression-based
methods is a promising direction for further research.

\section{Preliminaries}\label{s:pre}
Let $A$ be an alphabet, and denote  $A^*$ the set of  tuples $\cup_{i=1}^\infty A^i$. 
In this work we consider the case  $A=\R$; 
extensions to the multidimensional case, as well as to more general spaces, are
straightforward. 
 Distributions, or  (stochastic) processes, are measures on the space $(A^\infty,\mathcal F_{A^\infty})$, where $\mathcal F_{A^\infty}$ is the 
Borel sigma-algebra of $A^\infty$. When talking about joint distributions of $N$ samples,  we mean 
distributions on the space $((A^N)^\infty,\mathcal F_{(A^N)^\infty})$.

For each $k,l\in\N$, let $B^{k,l}$ be  the partition of the set $A^k$ into $k$-dimensional cubes  with volume $h_l^k=(1/l)^k$ (the cubes start at 0).
 Moreover, define $B^k=\cup_{l\in\N}B^{k,l}$
and $\mathcal B=\cup_{k=1}^\infty B^k$.
The set $\{B\times A^\infty: B\in B^{k,l}, k,l\in\N\}$
 generates the Borel $\sigma$-algebra on $\R^\infty=A^\infty$.
For a set $B\in \mathcal B$ let $|B|$ be the index $k$ of the set $B^k$  that
$B$ comes from: $|B|= k: B\in B^k$.

We use the abbreviation $X_{1..k}$ for $X_1,\dots,X_k$. 
For a sequence $\x\in A^n$ and a set $B\in \mathcal B$ denote $\nu(\x,B)$
the frequency with which the sequence $\x$ falls in the set~$B$.
{\scriptsize
 \begin{multline*} 
 \nu(\x,B):=\\ \left\{ \begin{array}{rl}  {1\over n-|B|+1}\sum_{i=1}^{n-|B|+1}
 I_{\{(X_i,\dots,X_{i+|B|-1})\in B\}} & \text{ if }n\ge |B|, \\
 0 & \text{ otherwise.}\end{array}\right.
 \end{multline*}
}

A process $\rho$ is {\em stationary}  if 
$
\rho(X_{1..|B|}=B)=\rho(X_{t..t+|B|-1}=B)
$
 for any $B\in A^*$ and $t\in\N$. We further abbreviate   $\rho(B):=\rho(X_{1..|B|}=B)$.
A stationary process $\rho$ is called {\em (stationary) ergodic} if
   the frequency of occurrence of each word
$B$ in a sequence $X_1,X_2,\dots$ generated by $\rho$ tends to its
a priori (or limiting) probability a.s.: 
$
\rho(\lim_{n\rightarrow\infty}\nu(X_{1..n},B)= \rho(B))=1.
$ 
 Denote $\mathcal E$ the set  of all stationary ergodic processes.

\begin{definition}[distributional distance]
 The  distributional distance  is defined for a pair of processes
$\rho_1,\rho_2$ as follows~(e.g. \cite{Gray:88})
$$
d(\rho_1,\rho_2)=\sum_{m,l=1}^\infty w_m w_l \sum_{B\in B^{m,l}} |\rho_1(B)-\rho_2(B)|,
$$
where $w_j=2^{-j}$.
\end{definition}
(The   weights in the definition  are fixed for the sake of concreteness only; we could take any other summable sequence of
positive weights instead.)
In words, we are taking a sum over a series of partitions into cubes of decreasing volume (indexed by $l$) of all sets $A^k$, $k\in\N$, and 
count the differences in probabilities of all cubes in all these partitions. These differences in probabilities are weighted: smaller
weights are given to larger $k$ and finer partitions.
It is easy to see that $d$ is a metric.
We refer to \cite{Gray:88} for more information on this metric  and its properties.

The clustering algorithms presented below are based  
 on {\em empirical estimates of  the  distance $d$}:
\begin{multline}\label{eq:emd}
 \hat d(X^1_{1..n_1},X^2_{1..n_2})=\\\sum_{m,l=1}^\infty w_m w_l \sum_{B\in B^{m,l}} |\nu(X^1_{1..n_1},B)-\nu(X^2_{1..n_2},B)|,
\end{multline}
where $n_1,n_2\in\N$, $\rho\in\S$, $X^i_{1..n_i}\in A^{n_i}$. 

Although the expression~(\ref{eq:emd}) involves taking three infinite sums, it will be shown below that it can be easily calculated.

\begin{lemma}[$\hat d$ is consistent]\label{th:dd} Let $\rho_1,\rho_2\in\mathcal E$ and let two samples $\x_1=X^1_{1..n_1}$
and $\x_2=X^2_{1..n_2}$ be generated by a distribution $\rho$ such that the  marginal distribution of $X^i_{1..n_1}$ is $\rho_i$, $i=1,2$,
and the joint distribution $\rho$ is stationary ergodic. Then
$$
\lim_{n_1,n_2\rightarrow\infty}\hat d(X^1_{1..n_1},X^2_{1..n_2})=d(\rho_1,\rho_2)\ \rho\text{--a.s.}
$$
\end{lemma}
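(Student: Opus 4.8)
The plan is to reduce the claimed convergence of the triple infinite sum to the pointwise (cube-by-cube) convergence of empirical frequencies supplied by the ergodic theorem, interchanging the limit with the summations by means of two nested truncation arguments. Throughout I abbreviate $\nu_i(B):=\nu(X^i_{1..n_i},B)$. First I would record the elementary per-cube convergence: since the $i$-th marginal of $\rho$ is $\rho_i\in\mathcal E$ and $\nu_i(B)$ is a function of the $i$-th sample alone, the ergodic theorem as stated in Section~\ref{s:pre} gives $\nu_i(B)\to\rho_i(B)$ $\rho$-a.s.\ as $n_i\to\infty$, for each fixed $B\in\mathcal B$ and $i\in\{1,2\}$. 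As $\mathcal B$ is countable, a countable intersection shows this holds simultaneously for all $B$ on a single event of full $\rho$-measure; I fix such an event and argue on it.

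Second, two bounds drive the argument. Because $B^{m,l}$ is an exhaustive partition of $A^m$, we have $\sum_{B\in B^{m,l}}\nu_i(B)=1$ whenever $n_i\ge m$ (and $=0$ otherwise) and $\sum_{B\in B^{m,l}}\rho_i(B)=1$; hence both inner sums $\hat d_{m,l}:=\sum_{B\in B^{m,l}}|\nu_1(B)-\nu_2(B)|$ and $d_{m,l}:=\sum_{B\in B^{m,l}}|\rho_1(B)-\rho_2(B)|$ lie in $[0,2]$. Since $\sum_{m,l}w_mw_l=\big(\sum_m 2^{-m}\big)^2<\infty$, the contribution of indices with $m>M$ or $l>L$ to either $\hat d$ or $d$ is at most $2\sum_{m>M\text{ or }l>L}w_mw_l$, which can be forced below any $\epsilon/3$ by choosing $M,L$ large, uniformly in $n_1,n_2$.

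The crux is to show $\hat d_{m,l}\to d_{m,l}$ for each fixed $(m,l)$, and this is the step I expect to be the main obstacle: the partition $B^{m,l}$ has infinitely many cubes, whereas ergodicity only gives convergence cube by cube, so the limit and the sum over $B$ must be exchanged. I would handle this by a Scheff\'e-type truncation. Given $\delta>0$, choose a \emph{finite} $G\subset B^{m,l}$ with $\sum_{B\notin G}\rho_i(B)<\delta$ for $i=1,2$ (possible since the probabilities sum to $1$); then the true tail satisfies $\sum_{B\notin G}|\rho_1(B)-\rho_2(B)|<2\delta$, while the empirical tail obeys $\sum_{B\notin G}|\nu_1(B)-\nu_2(B)|\le 2-\sum_{B\in G}\big(\nu_1(B)+\nu_2(B)\big)$, which converges to $2-\sum_{B\in G}\big(\rho_1(B)+\rho_2(B)\big)<2\delta$ because $G$ is finite and each frequency converges. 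On the finite set $G$ the convergence $\sum_{B\in G}|\nu_1(B)-\nu_2(B)|\to\sum_{B\in G}|\rho_1(B)-\rho_2(B)|$ is immediate, and letting $\delta\to0$ yields $\hat d_{m,l}\to d_{m,l}$.

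Finally I would assemble these pieces with an $\epsilon/3$ argument: truncate both series at $(M,L)$ so the two partition-tails are each below $\epsilon/3$, and then apply the fixed-$(m,l)$ convergence to the \emph{finitely many} surviving terms to make the truncated difference below $\epsilon/3$ once $n_1,n_2$ are large. Since $\epsilon$ is arbitrary, this gives $\hat d\to d$ on the full-measure event, which is the assertion of the lemma. The only genuinely delicate point is the interchange of limit and summation over cubes within a single partition; the remaining tail estimates over $(m,l)$ and the final combination are routine bookkeeping built on the two uniform bounds above.
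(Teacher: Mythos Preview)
Your proposal is correct and follows essentially the same route as the paper: truncate the $(m,l)$ sums using summability of the weights, then within each fixed partition $B^{m,l}$ reduce to finitely many cubes carrying all but $\delta$ of the mass, apply the ergodic theorem on those, and assemble via an $\epsilon/3$ argument. Your explicit Scheff\'e-type control of the empirical tail, $\sum_{B\notin G}|\nu_1(B)-\nu_2(B)|\le 2-\sum_{B\in G}(\nu_1(B)+\nu_2(B))$, is a slightly cleaner way to bound the off-$G$ contribution than what the paper writes, but the underlying structure is identical.
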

\begin{proof}
The idea of the proof is simple: for each set $B\in\mathcal B$, the frequency with which the sample $\x_1$ falls into $B$ converges to the 
probability $\rho_1(B)$, and analogously for the second sample. When the sample sizes grow, there will be more and more sets $B\in\mathcal B$
whose frequencies have already converged to the probabilities, so that the cumulative weight of those sets whose frequencies
have not converged yet, will tend to 0.

For any $\epsilon>0$ we can find an index $J$ such that
$\sum_{i,j=J}^\infty w_iw_j<\epsilon/3$.
Moreover, for each $m,l$ we can find such elements $B_1^{m,l},\dots,B_{t_{m,l}}^{m,l}$, for some $t_{m,l}\in\N$, of the 
partition $B^{m,l}$ that $\rho_i(\cup_{i=1}^{t_{m,l}}B_i^{m,l})\ge 1-\epsilon/6Jw_mw_l$.
For each $B^{m,l}_j$, where $m,l\le J$ and $j\le t_{m,l}$, we have $\nu((X^1_1,\dots,X^1_{n_1}),B_j^{m,l})\rightarrow \rho_1(B^{m,l}_j)$
a.s., so that
\begin{multline*}
 |\nu((X^1_1,\dots,X^1_{n_1}),B^{m,l}_j) - \rho_1(B^{m,l}_j)|\\<\rho_1(B^{m,l}_j)\epsilon/(6Jw_j)
\end{multline*} for all $n_1\ge u$, for some $u\in\N$; define $U^{m,l}_j:=u$.
 Let
$U:=\max_{m,l\le J, j\le t_{m,l}}U^{m,l}_j$ ($U$ depends
on the realization $X^1_1,X^1_2,\dots$).
Define analogously $V$ for the sequence $(X^2_1,X^2_2,\dots)$. Thus for $n_1>U$ and $n_2>V$ we have
{\scriptsize
\begin{multline*}
   |\hat d(\x_1,\x_2) - d(\rho_1,\rho_2)| =
\\
\left|\sum_{m,l=1}^\infty
w_mw_l\sum_{B\in B^{k,l}}\big(|\nu(\x_1,B)-\nu(\x_2,B)| - |\rho_1(B)-\rho_2(B)| \big)
\right|\\
   \le \sum_{m,l=1}^\infty
w_mw_l\sum_{B\in B^{k,l}} w_i\big(|\nu(\x_1,B)-\rho_1(B)| +
|\nu(\x_2,B)-\rho_2(B)| \big) \\
    \le \sum_{m,l=1}^J w_mw_l\sum_{i=1}^{t_{k,l}}\big(|\nu(\x_1,B^{m,l}_i)-\rho_1(B^{m,l}_i)| \\
 +
|\nu(\x_2,B^{m,l}_i)-\rho_2(B^{m,l}_i)| \big) +2\epsilon/3\\
   \le \sum_{m,l=1}^Jw_mw_l\sum_{i=1}^{t_{k,l}}(\rho_1(B^{m,l}_i)\epsilon/(6Jw_mw_l) \\+ \rho_2(B^{m,l}_i)\epsilon/(6Jw_mw_l))
+2\epsilon/3 \le\epsilon,
\end{multline*}
}
which proves the  statement.
\end{proof}

\section{Main results}
The clustering problem can be defined as follows. We are given $N$ samples $\x_1,\dots,\x_N$, where 
each sample $\x_i$ is a string of length $n_i$ of symbols from $A$: $\x_i=X^i_{1..n_i}$. 
 Each  sample is  generated by one out of $k$ different {\em unknown} stationary ergodic distributions $\rho_1,\dots,\rho_k\in\mathcal E$.
Thus, there is a partitioning $I=\{I_1,\dots,I_k\}$ of  the set $\{1..N\}$ into $k$ {\em disjoint} subsets $I_j, j=1..k$ 
$$
\{1..N\}=\cup_{j=1}^k I_j,
$$
such that $\x_j$, $1\le j\le N$ is generated by $\rho_j$ if and only if $j\in I_j$.
The partitioning $I$ is called the {\em target clustering} and the sets $I_i, 1\le i\le k$, are called the
{\em target clusters}. Given samples $\x_1,\dots,\x_N$ and a target clustering $I$,  let  $I(\x)$
denote the cluster that contains~$\x$.

A {\em clustering function} $F$ takes a finite number of samples  
$\x_1,\dots,\x_N$ and an optional parameter $k$ (the target number of 
clusters) and outputs a partition $F(\x_1,\dots,\x_N,(k))=\{T_1,\dots,T_k\}$ of the set $\{1..N\}$.
\begin{definition}[asymptotic consistency] Let a finite number $N$ of samples be given, and let the target 
clustering partition be $I$. Define  $n=\min\{n_1,\dots,n_N\}$.
 A clustering function  $F$ is strongly asymptotically consistent if  
$
  F(\x_1,\dots,\x_N,(k))=I
$ from some $n$ on with probability~1.
 A clustering function is weakly asymptotically consistent if  
$
  P(F(\x_1,\dots,\x_N,(k))=I)\to1.
$
\end{definition}

Note that the consistency is asymptotic with respect to {\em the minimal length of the sample}, and not with respect to the {\em number of samples}.

\subsection{Known number of clusters}
Algorithm~\ref{alg:1} is a simple clustering algorithm, which, given the number $k$ of clusters, will be shown 
to be consistent under most general assumptions.
It works as follows. The  point $\x_1$ is assigned to the first cluster. 
Next, find the point that is farthest away from $\x_1$ in the empirical distributional distance $\hat d$, and 
assign this point to the second cluster.  For each $j=3..k$, find a point that maximizes the minimal distance
to those points already assigned to clusters, and assign it to the cluster $j$.
Thus we have one point in each of the $k$ clusters. Next simply assign each of the remaining points
to the cluster that contains the closest points from those $k$ already assigned.
(One may notice that  Algorithm~\ref{alg:1} is one iteration of the $k$-means algorithm, with a specific initialization,
and a specially  designed distance.)
\begin{algorithm}
\caption{The case of known number of clusters~$k$}
\label{alg:1}
\begin{algorithmic}
\STATE {INPUT}: The number of clusters $k$, samples $\x_1,\dots,x_N$.
\STATE Initialize: $j:=1$, $c_1 :=1$, $T_1:=\{x_{c_1}\}$.
\FOR {$j:=2$ to $k$}
\STATE $c_j:=\argmax\{i=1,\dots,N: \min_{t=1}^{j-1}\hat d(\x_i,\x_{c_t})\}$
\STATE  $T_j:=\{x_{c_j}\}$
\ENDFOR
\FOR {$i=1$ to $N$}
\STATE Put $\x_i$ into the set $T_{\argmin_{j=1}^k{\hat d(\x_i,\x_{c_j}})}$
\ENDFOR
\STATE OUTPUT: the sets $T_j$, $j=1..k$.
\end{algorithmic}
\end{algorithm}
\begin{proposition}[calculating $\hat d(\x_1,\x_2)$]\label{th:cd}
 For two samples $\x_1=X^1_{1..n_1}$ and $\x_2=X^2_{1..n_2}$ the 
computational complexity (time and space) of calculating the empirical distributional distance $\hat d(\x_1,\x_2)$~(\ref{eq:emd}) is $O(n^2\log s^{-1}_{\min})$,
where $n=\max(n_1,n_2)$ and  
$$
s_{\min}=\min_{i=1..n_1, j=1..n_2, X^1_i\ne X^2_j}|X^1_i-X^2_j|.
$$
\end{proposition}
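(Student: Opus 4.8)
The plan is to show that the infinite sum defining $\hat d$ collapses to a finite computation, and then to bound the cost of that computation.

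First I would dispose of the outer sum over the dimension $m$. Since $\nu(\x_i,B)=0$ whenever $|B|>n_i$, every term with $m>\max(n_1,n_2)$ vanishes, and for $\min(n_1,n_2)<m\le\max(n_1,n_2)$ exactly one of the two frequency vectors is identically zero, so $\sum_{B\in B^{m,l}}|\nu(\x_1,B)-\nu(\x_2,B)|=1$ for every $l$; the contribution of this whole block of $m$'s is then a geometric sum in $w_m w_l$ that is evaluated in closed form. Hence only the dimensions $m=1,\dots,\min(n_1,n_2)$ require genuine work, which is $O(n)$ values of $m$.

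For such an $m$, write $S_{m,l}:=\sum_{B\in B^{m,l}}|\nu(\x_1,B)-\nu(\x_2,B)|$ and study its dependence on the partition index $l$. The key step is a stabilization lemma: $S_{m,l}$ stops changing once the cell width drops below $s_{\min}$. Refining the grid only splits cells, and splitting a cell that meets the $m$-windows of just one sample leaves its term unchanged, since $|\nu(\x_1,B)-0|=\nu(\x_1,B)$ is merely redistributed; so $S_{m,l}$ can change only through cells containing windows of both samples. Now two windows in a common cell agree in every coordinate up to the cell width, whereas a window of $\x_1$ and a window of $\x_2$ either coincide or, in some coordinate, consist of distinct cross-sample symbols and hence differ there by at least $s_{\min}$. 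Therefore, once the cell width is below $s_{\min}$, every cell meeting both samples contains only windows that are genuinely equal across the two samples, and $S_{m,l}$ has reached its limiting value $c_m$ (the sum taken with every distinct window in its own cell). Writing $L$ for the first $l$ at which the width falls below $s_{\min}$, the tail $\sum_{l\ge L}w_l S_{m,l}=c_m\sum_{l\ge L}w_l$ is again a closed-form geometric series, so only the $O(\log s_{\min}^{-1})$ scales $l<L$ needed to resolve $s_{\min}$ must be handled explicitly.

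It remains to compute, for each of these $O(\log s_{\min}^{-1})$ scales, the numbers $S_{m,l}$ for all $m$ at once. For a fixed scale I would pass once over both strings and read off, for each starting position, the integer vector of cell coordinates of the window it begins; extending a window by one symbol appends one coordinate, so the cell indices of all $O(n^2)$ windows (over all lengths $m$ and both samples) are produced in $O(n^2)$ time by incremental updates, after which $S_{m,l}$ is obtained by hashing the cell indices within each $m$ and summing $|\nu(\x_1,B)-\nu(\x_2,B)|$ over the occupied cells. This is $O(n^2)$ per scale, hence $O(n^2\log s_{\min}^{-1})$ overall, with the same order of space for the assignments; the two closed-form tails do not change the order.

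The main obstacle is the stabilization lemma together with the accompanying scale count: one must verify carefully both that same-sample cell splits never alter $S_{m,l}$ — so that $s_{\min}$, a quantity defined only \emph{across} the two samples, really is the relevant resolution — and that the number of scales to examine before the geometric tail takes over is only logarithmic in $s_{\min}^{-1}$. The per-scale bookkeeping that keeps the cost at $O(n^2)$ rather than $O(n^3)$, by amortizing window-index construction across all $m$, is then a routine but necessary refinement.
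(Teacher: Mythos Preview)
Your proposal is correct and follows essentially the same three-step decomposition as the paper: bound $m$ by $n$, show $S_{m,l}$ stabilizes once the cell width resolves $s_{\min}$, and observe that each $T^{m,l}$ has $O(n)$ nonzero terms. You are in fact more careful than the paper on two points the paper simply asserts: you justify why only \emph{cross-sample} separations matter for stabilization (same-sample splits leave the $L_1$ sum unchanged), and you spell out the incremental window-index construction that keeps the per-scale cost at $O(n^2)$ rather than a naive $O(n^3)$.
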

\begin{proof}
 First,  observe that for fixed $m$ and $l$, the sum 
\begin{equation}\label{eq:psum}
T^{m,l}:=\sum_{B\in B^{m,l}} |\nu(X^1_{1..n_1},B)-\nu(X^2_{1..n_2},B)|
\end{equation}
has not more than $n_1+n_2 -2m+2$  non-zero terms  (assuming $m\le n_1,  n_2$; the other case is obvious).
Indeed, for each $i=0,1$, in the sample $\x_i$  there are $n_i-m+1$ tuples of size $k$: $X^i_{1..m}, X^i_{2..m+1},\dots,X^i_{n_1-m+1..n_1}$.
Therefore, the complexity of calculating  $T^{m,l}$ is $O(n_1+n_2 -2m+2)=O(n)$.
Furthermore, observe that for each $m$, for all $l>\log s^{-1}_{\min}$ the term $T^{m,l}$ is constant.
Therefore, it is enough to calculate $T^{m,1},\dots,T^{m,\log s^{-1}_{\min}}$, since
 for fixed $m$  
{\scriptsize
\begin{equation*}
 \sum_{l=1}^\infty w_mw_l T^{m,l}=w_mw_{\log s^{-1}_{\min}} T^{m,\log s^{-1}_{\min}}+\sum_{l=1}^{\log s^{-1}_{\min}} w_mw_l T^{m,l}
\end{equation*}
}
(that is, we double the weight of the last non-zero term).
Thus, the complexity of calculating $\sum_{l=1}^\infty w_mw_l T^{m,l}$
is $O(n\log s^{-1}_{\min})$. Finally, for all $m>n$ we have $T^{m,l}=0$. Since $\hat d(\x_1,\x_2)=\sum_{m,l=1}^\infty w_m,w_l T^{m,l}$, the statement is proven.
\end{proof}

\begin{theorem}\label{th:cons}
 Let $N\in\N$ and suppose that the samples $\x_1,\dots,\x_N$ are generated in such a way that the joint distribution is stationary ergodic.
If the correct number of clusters $k$ is known, then Algorithm~\ref{alg:1} is strongly asymptotically consistent.
Algorithm~\ref{alg:1} makes $O(kN)$ calculations of $\hat d(\cdot,\cdot)$, so that its computational complexity is 
$O(kN n_{\max}^2\log s_{\min}^{-1})$, where $n_{\max}=\max_{i=1}^kn_i$ and 
$$
s_{\min}=\min_{u,v=1..N, u\ne v, i=1..n_u, j=1..n_v, X^u_i\ne X^v_j}|X^u_i-X^v_j|.
$$
\end{theorem}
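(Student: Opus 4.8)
The plan is to reduce everything to the convergence guarantee of Lemma~\ref{th:dd} together with a single ``gap'' separating the distinct distributions, and then to track the two phases of Algorithm~\ref{alg:1} (centre selection, then assignment). Let $c(i)$ denote the index of the target cluster containing $\x_i$, and set
$$
\delta:=\min_{1\le a<b\le k} d(\rho_a,\rho_b).
$$
Since $d$ is a metric and $\rho_1,\dots,\rho_k$ are pairwise distinct, $\delta>0$. The first ingredient is to upgrade the single-pair statement of Lemma~\ref{th:dd} to a uniform one over all pairs. For any pair $(u,v)$, the joint distribution of $(\x_u,\x_v)$ is a coordinate projection, hence a shift-commuting factor, of the stationary ergodic joint distribution of all $N$ samples; a factor of a stationary ergodic process is stationary ergodic, and its marginals are $\rho_{c(u)},\rho_{c(v)}\in\mathcal E$. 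Thus Lemma~\ref{th:dd} applies to every pair, and since $N$ is finite the intersection of these finitely many probability-one events again has probability one. Fixing $\epsilon:=\delta/4$, with probability one there is therefore an $n_0$ such that for all $n=\min_i n_i\ge n_0$ and all $u,v$ we have $|\hat d(\x_u,\x_v)-d(\rho_{c(u)},\rho_{c(v)})|<\epsilon$. As $d(\rho_{c(u)},\rho_{c(v)})$ equals $0$ when $c(u)=c(v)$ and is at least $\delta$ otherwise, this yields the dichotomy $\hat d(\x_u,\x_v)<\delta/4$ for samples in the same target cluster and $\hat d(\x_u,\x_v)>3\delta/4$ for samples in different target clusters.

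Next I would analyse the centre-selection loop and show by induction that after step $j$ the chosen centres $\x_{c_1},\dots,\x_{c_j}$ lie in $j$ \emph{distinct} target clusters. The base case $\x_{c_1}=\x_1$ is immediate. For the step, assume $c_1,\dots,c_{j-1}$ represent $j-1$ distinct clusters. Any sample whose cluster is already represented lies within $\delta/4$ of one of the chosen centres, so its score $\min_{t<j}\hat d(\x_i,\x_{c_t})$ is below $\delta/4$; any sample from a not-yet-represented cluster is farther than $3\delta/4$ from \emph{every} chosen centre, so its score exceeds $3\delta/4$. Since $j-1<k$ and every target cluster is non-empty, an unrepresented cluster exists, so the $\argmax$ is attained at a sample from such a cluster. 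This completes the induction and shows that after the loop there is exactly one centre in each of the $k$ clusters. The assignment phase is then immediate: for each $i$, the centre in $\x_i$'s own cluster is within $\delta/4$ of $\x_i$ while every other centre is farther than $3\delta/4$, so $\argmin_{j}\hat d(\x_i,\x_{c_j})$ returns the centre of $c(i)$ (this covers the centres themselves, since $\hat d(\x,\x)=0$). Hence for all $n\ge n_0$ the output coincides with the target clustering $I$, which is strong asymptotic consistency.

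For the complexity claim I would count distance evaluations, caching each value $\hat d(\x_i,\x_{c_t})$ the first time it is needed. Each iteration of the selection loop introduces one new centre and thus at most $N$ new evaluations, for $O(kN)$ total; the assignment loop requests only values $\hat d(\x_i,\x_{c_j})$, all already cached. Hence $O(kN)$ evaluations of $\hat d$ suffice, each costing $O(n_{\max}^2\log s_{\min}^{-1})$ by Proposition~\ref{th:cd} (the global $s_{\min}$ bounds every pairwise quantity), which gives the stated $O(kN\,n_{\max}^2\log s_{\min}^{-1})$ bound.

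The main obstacle is the centre-selection induction: one must rule out two centres landing in the same cluster while some cluster stays unrepresented, and this is exactly where the separation $\delta>0$ combined with the uniform $\epsilon<\delta/4$ bound does the work, via the non-emptiness of every target cluster. A secondary point that deserves to be made explicit is that Lemma~\ref{th:dd} is stated for a single pair, so the factor-of-ergodic-is-ergodic observation is needed to license its simultaneous use on all $\binom{N}{2}$ pairs.
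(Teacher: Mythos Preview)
Your argument is correct and is essentially the paper's proof written out in greater detail: both establish a uniform gap via Lemma~\ref{th:dd} so that same-cluster empirical distances are small and different-cluster ones are large, then deduce that the centre-selection loop hits $k$ distinct clusters and the assignment loop is therefore correct, followed by the same caching count for the $O(kN)$ distance evaluations. Your explicit justification that the pairwise marginal of the $N$-sample joint is stationary ergodic (so Lemma~\ref{th:dd} applies to every pair) and your formal induction on $j$ make precise what the paper leaves implicit.
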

 Observe that the samples are not required to be generated independently. The only requirement on the distribution of samples is that the joint 
distribution is stationary ergodic. This is perhaps one of the mildest possible probabilistic assumptions.
\begin{proof}
 By Lemma~\ref{th:dd}, $\hat d(\x_i,\x_j)$, $i,j\in\{1..N\}$ converges to 0 if and only if $\x_i$ and $\x_j$ are in the same
cluster. Since there are only finitely many samples $\x_i$, there exists some $\delta>0$ such that,  from some $n$ on,
we will have $\hat d(\x_i,\x_j)<\delta$ if $\x_i,\x_j$ belong to the same target cluster ($I(\x_i)=I(\x_j)$),
and $\hat d(\x_i,\x_j)>\delta$ otherwise ($I(\x_i)\ne I(\x_j)$).
Therefore, from some $n$ on,   for every $j\le k$ we will have   $\max\{i=1,\dots,N: \min_{t=1}^{j-1}\hat d(\x_i,\x_{c_t})\}>\delta$
and 
the sample $\x_{c_j}$, where $c_j=\argmax\{i=1,\dots,N: \min_{t=1}^{j-1}\hat d(\x_i,\x_{c_t})\}$, will be selected from a target
cluster that does not contain any $\x_{c_i}$, $i<j$. The consistency statement follows.

Next, let us find how many 
pairwise distance estimates $\hat d(\x_i,\x_j)$ the algorithm has to make.
On the first iteration of the loop, it has to calculate $\hat d(\x_i,\x_{c_1})$ for all $i=1..N$.
On the second iteration, it needs again $\hat d(\x_i,\x_{c_1})$ for all $i=1..N$, which are already 
calculated, and also $\hat d(\x_i,\x_{c_2})$ for all $i=1..N$, and so on: on $j$th iteration 
of the loop we need to calculate $ d(\x_i,\x_{c_j})$, $i=1..N$, which gives at most $kN$ pairwise
distance calculations in total. 
The statement about computational complexity follows from this and Proposition~\ref{th:cd}:
indeed, apart from the calculation of $\hat d$, the rest of the computations is
 of order  $O(kN)$.
\end{proof}

\noindent{\bf Complexity--precision trade--off.} The bound on the computational complexity of Algorithm~\ref{alg:1}, given 
in Theorem~\ref{th:cons}, is given for the case of {\em precisely} calculated distance estimates $\hat d(\cdot,\cdot)$.
However, precise estimates are not needed if we only want to have an asymptotically consistent algorithm.
Indeed, following  the proof of Lemma~\ref{th:dd}, it is easy to check that if we replace in~(\ref{eq:emd}) the infinite sums with sums
over any number of terms $m_n$, $l_n$ that grows to infinity with $n=\min(n_1,n_2)$, 
and if we replace partitions $B^{m,l}$ by 
their (finite) subsets $B^{m,l,n}$ which increase to $B^{m,l}$, 
then we still have a consistent
estimate of $d(\cdot,\cdot)$.

\begin{definition}[$\check d$]
Let $m_n, l_n$ be some sequences of numbers,   $B^{m,l,n}\subset B^{m,l}$ for all $m,l,n\in\N$, and denote  $n:=\min\{n_1,n_2\}$. Define
\begin{multline}\label{eq:emd3}
 \check d(X^1_{1..n_1},X^2_{1..n_2}):=\\\sum_{m=1}^{m_n}\sum_{l=1}^{l_n} w_m w_l \sum_{B\in B^{m,l,n}} |\nu(X^1_{1..n_1},B)-\nu(X^2_{1..n_2},B)|.
\end{multline}
\end{definition}
\begin{lemma}[$\check d$ is consistent]\label{th:td} 
Assume the conditions of Lemma~\ref{th:dd}. 
Let $l_n$ and $m_n$ be any sequences of integers that go to infinity with $n$, 
and let, for each $m,l\in\N$, the sets $B^{m,l,n}$, $n\in\N$ be an increasing sequence of  subsets of $B^{m,l}$,   such that $\cup_{n\in\N} B^{m,l,n}=B^{m,l}$.
 Then
$$
\lim_{n_1,n_2\rightarrow\infty}\check d(X^1_{1..n_1},X^2_{1..n_2})=d(\rho_1,\rho_2)\ \rho\text{--a.s.}.
$$
\end{lemma}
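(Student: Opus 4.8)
The plan is to deduce the statement from Lemma~\ref{th:dd} by controlling the \emph{nonnegative} gap $\hat d - \check d$. Since $\check d$ is obtained from $\hat d$ by discarding terms, each of which is nonnegative, we have $\check d\le\hat d$, and hence
$$
|\check d(\x_1,\x_2) - d(\rho_1,\rho_2)| \le \big(\hat d(\x_1,\x_2)-\check d(\x_1,\x_2)\big) + |\hat d(\x_1,\x_2)-d(\rho_1,\rho_2)|.
$$
The second summand tends to $0$ $\rho$-a.s. by Lemma~\ref{th:dd}, so it suffices to prove $\hat d-\check d\to 0$ $\rho$-a.s. I would first catalogue the discarded terms. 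They are of two kinds: (i) the full blocks $T^{m,l}=\sum_{B\in B^{m,l}}|\nu(\x_1,B)-\nu(\x_2,B)|$ with $m>m_n$ or $l>l_n$; and (ii) for $m\le m_n$, $l\le l_n$, the partial blocks summed over $B^{m,l}\setminus B^{m,l,n}$. The basic uniform estimate is $T^{m,l}\le 2$, valid because for each $i$ the frequencies $\nu(\x_i,\cdot)$ sum to at most $1$ over the partition $B^{m,l}$. Using this, the kind-(i) contribution is at most $2\sum_{m>m_n\text{ or }l>l_n}w_mw_l$, which vanishes as $n\to\infty$ since $m_n,l_n\to\infty$ and $\sum_{m,l}w_mw_l<\infty$.

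For kind (ii) I would fix $\epsilon>0$ and split at a finite threshold $J$. The part with $\max(m,l)\ge J$ is again bounded by $2\sum_{\max(m,l)\ge J}w_mw_l$, which is $<\epsilon/3$ for $J$ large. The part with $m,l<J$ is a finite sum of blocks, so it is enough to show that each block vanishes a.s., i.e. that for each fixed $m,l$ one has $\sum_{B\in B^{m,l}\setminus B^{m,l,n}}|\nu(\x_1,B)-\nu(\x_2,B)|\to 0$ $\rho$-a.s. Bounding this crudely by $\nu(\x_1,B^{m,l}\setminus B^{m,l,n})+\nu(\x_2,B^{m,l}\setminus B^{m,l,n})$, the claim reduces to showing $\nu(\x_i,B^{m,l}\setminus B^{m,l,n})\to 0$ a.s. for $i=1,2$.

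This last point is the \emph{main obstacle}: the discarded set $B^{m,l}\setminus B^{m,l,n}$ shrinks with $n$ while the frequency $\nu(\x_i,\cdot)$ itself depends on $n$, so one cannot pass the ergodic limit directly inside. The resolution is a sandwich. Since $B^{m,l}$ is a countable partition of $\R^m$ with $\rho_i(B^{m,l})=1$, for any $\epsilon'>0$ there is a \emph{fixed} finite subfamily $S_{m,l}\subseteq B^{m,l}$ with $\rho_i(S_{m,l})\ge 1-\epsilon'$. Because $B^{m,l,n}\uparrow B^{m,l}$, we have $S_{m,l}\subseteq B^{m,l,n}$ for all $n$ large, whence $B^{m,l}\setminus B^{m,l,n}\subseteq B^{m,l}\setminus S_{m,l}$ and therefore $\nu(\x_i,B^{m,l}\setminus B^{m,l,n})\le 1-\nu(\x_i,S_{m,l})$ (using $\nu(\x_i,B^{m,l})\le 1$). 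As $S_{m,l}$ is finite, $\nu(\x_i,S_{m,l})\to\rho_i(S_{m,l})\ge 1-\epsilon'$ a.s.\ by ergodicity of $\rho_i$; letting $\epsilon'\to 0$ gives $\nu(\x_i,B^{m,l}\setminus B^{m,l,n})\to 0$ a.s. Collecting the kind-(i) bound, the $\max(m,l)\ge J$ bound, and the finitely many vanishing blocks yields $\hat d-\check d\to 0$ $\rho$-a.s., which finishes the proof. (Equivalently, one could repeat the argument of Lemma~\ref{th:dd} verbatim, noting that the finitely many cubes chosen there automatically lie in $B^{m,l,n}$ once $n$ is large; the reduction above merely isolates the one genuinely new step.)
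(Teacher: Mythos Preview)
Your proof is correct. The paper's own proof is much terser: it simply observes the deterministic limit
\[
\lim_{n\to\infty}\sum_{m=1}^{m_n}\sum_{l=1}^{l_n} w_m w_l \sum_{B\in B^{m,l,n}} |\rho_1(B)-\rho_2(B)|=d(\rho_1,\rho_2)
\]
and then says ``follow the proof of Lemma~\ref{th:dd}.'' In other words, the paper re-runs the $\epsilon/J$-truncation argument of Lemma~\ref{th:dd} directly for $\check d$, relying implicitly on the fact that the finitely many cubes $B^{m,l}_j$ selected there eventually lie in $B^{m,l,n}$. You instead use Lemma~\ref{th:dd} as a black box and control the nonnegative gap $\hat d-\check d$; your sandwich argument (fixing a finite $S_{m,l}$ with $\rho_i(S_{m,l})\ge 1-\epsilon'$, absorbing $B^{m,l}\setminus B^{m,l,n}$ into $B^{m,l}\setminus S_{m,l}$ for large $n$, and applying ergodicity to the finite $S_{m,l}$) makes explicit precisely the one step the paper leaves to the reader. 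Your parenthetical remark at the end already identifies the paper's route exactly. The two approaches are equivalent in content; yours is more modular and self-contained, the paper's is shorter but leans on the reader to reassemble the pieces.
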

\begin{proof}
 It is enough to observe that 
\begin{multline*}
\lim_{n_1,n_2\to\infty}\sum_{m=1}^{m_n}\sum_{l=1}^{l_n} w_m w_l \sum_{B\in B^{m,l,n}} |\rho_1(B)-\rho_2(B)|\\=d(\rho_1,\rho_2),
\end{multline*}
and then follow the proof of Lemma~\ref{th:dd}.
\end{proof}

 If we use the estimate $\check d(\cdot,\cdot)$ in Algorithm~\ref{alg:1} (instead of $\hat d(\cdot,\cdot)$), then we still get an asymptotically consistent clustering function.
Thus the following statement holds true.
\begin{proposition}\label{th:as2}
 Assume the conditions of Theorem~\ref{th:cons}. 
For all sequences  $m_n, l_n$ of numbers that increase to infinity with $n$, there is a strongly asymptotically consistent clustering
algorithm, whose computational complexity is $O(kN n_{\max}m_{n_{\max}}l_{n_{\max}})$.
\end{proposition}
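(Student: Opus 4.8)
The plan is to run Algorithm~\ref{alg:1} verbatim, but with every occurrence of $\hat d$ replaced by the truncated estimate $\check d$, and then to establish the two assertions of the proposition separately: strong asymptotic consistency of the resulting clustering function, and the stated complexity bound. The whole argument is a composition of results already in hand, with Lemma~\ref{th:td} playing the role that Lemma~\ref{th:dd} played in the proof of Theorem~\ref{th:cons}.

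For consistency, I would invoke Lemma~\ref{th:td}, which guarantees that for every pair $i,j\in\{1..N\}$ one has $\check d(\x_i,\x_j)\to d(\rho_i,\rho_j)$ $\rho$--a.s.\ as the sample lengths grow, provided $m_n,l_n\to\infty$ and the $B^{m,l,n}$ increase to $B^{m,l}$. Since $d$ is a metric, $d(\rho_i,\rho_j)=0$ exactly when $\x_i,\x_j$ lie in the same target cluster and is strictly positive otherwise. Because there are only finitely many samples, a.s.\ there is some $\delta>0$ and some index after which $\check d(\x_i,\x_j)<\delta$ whenever $I(\x_i)=I(\x_j)$ and $\check d(\x_i,\x_j)>\delta$ whenever $I(\x_i)\ne I(\x_j)$. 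From that point on the argument of Theorem~\ref{th:cons} carries over word for word: the greedy farthest-point initialization selects exactly one representative out of each of the $k$ target clusters, and the final assignment loop places every remaining sample into the cluster of its true representative, so the output coincides with the target clustering.

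For the complexity, I would first observe that, being the same algorithm, it makes $O(kN)$ evaluations of the distance, so it suffices to bound the cost of a single evaluation of $\check d(\x_1,\x_2)$. Following the counting in Proposition~\ref{th:cd}, for each fixed pair $(m,l)$ the inner sum over $B\in B^{m,l,n}$ has at most $n_1+n_2-2m+2=O(n_{\max})$ non-zero terms, since only the $m$-tuples actually occurring in the two samples can contribute. The key point is that this count does \emph{not} depend on the (possibly very large) cardinality of $B^{m,l,n}$: we enumerate only the visited cubes. Since the outer sums run over $m\le m_n$ and $l\le l_n$, there are at most $m_n l_n\le m_{n_{\max}}l_{n_{\max}}$ index pairs, giving a per-evaluation cost of $O(n_{\max}\,m_{n_{\max}}\,l_{n_{\max}})$. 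Multiplying by the $O(kN)$ evaluations yields the claimed $O(kN\,n_{\max}\,m_{n_{\max}}\,l_{n_{\max}})$.

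The only step I would be careful to spell out is this last observation, that for each $(m,l)$ the inner sum is supported on $O(n_{\max})$ cubes regardless of how fine the partition $B^{m,l,n}$ is; this is what keeps the complexity from blowing up with the refinement levels. Everything else is a direct transcription: the consistency half is Theorem~\ref{th:cons} with Lemma~\ref{th:td} substituted for Lemma~\ref{th:dd}, and the complexity half is the enumeration of Proposition~\ref{th:cd} with the summation ranges capped at $m_n$ and $l_n$ instead of running to $n$ and $\log s_{\min}^{-1}$.
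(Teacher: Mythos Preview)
Your proposal is correct and matches the paper's approach exactly: replace $\hat d$ by $\check d$ in Algorithm~\ref{alg:1}, obtain consistency by repeating the argument of Theorem~\ref{th:cons} with Lemma~\ref{th:td} in place of Lemma~\ref{th:dd}, and obtain the complexity by the counting in Proposition~\ref{th:cd} with the outer sums truncated at $m_n,l_n$. The paper in fact states the proposition without a formal proof, relying on precisely the observations you spell out---including the key point (which the paper remarks on just after the proposition) that the inner sum has $O(n_{\max})$ nonzero terms regardless of the size of $B^{m,l,n}$.
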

On the one hand, Proposition~\ref{th:as2} can be thought of as an artifact of the asymptotic definition of consistency;
on the other hand, in practice precise calculation of $\hat d(\cdot,\cdot)$ is hardly necessary. What we get from Proposition~\ref{th:as2}
is the possibility to select the appropriate trade--off between the computational burden, and the precision of clustering before asymptotic.

Note that the bound in Proposition~\ref{th:as2} does not involve the sizes of the sets $B^{m,l,n}$; in particular, one can
take $B^{m,l,n}=B^{m,l}$ for all $n$.
This is because, for every two samples $X_{1..n}^1$ and $X_{1..n}^2$, this sum 
has no more than $2n$ non-zero terms, whatever  are $m,l$. However, in the following section, where we are after
clustering with an unknown number of clusters $k$, and thus after controlled rates of convergence, 
the sizes of the sets $B^{m,l,n}$ will appear in the bounds.

\subsection{Unknown number of clusters}
So far we have shown that when the number of clusters is known in advance, consistent clustering is possible 
under the only assumption that the joint distribution of the samples is stationary ergodic.
However, under this assumption, in general, consistent clustering with unknown number of clusters is impossible.
Indeed, as was shown in \cite{Ryabko:10discr},  when we have only two {\em binary-valued} samples, generated {\em independently}
by two stationary ergodic distributions,
 it is impossible to decide whether they have been generated by the same or by different distributions, even in the 
sense of weak asymptotic consistency (this holds even if the distributions come from a smaller class: the set of all $B$-processes). 
Therefore, if the number of clusters is unknown, we have to settle for less, which means that we have to make 
stronger assumptions on the data. What we need is  known rates of convergence of frequencies to 
their expectations. Such rates are provided by assumptions on the mixing rates of the distribution 
generating the data. Here we will show that under rather mild assumptions on the mixing rates
(and, again,  without any modeling assumptions or assumptions of independence), consistent
clustering is possible when the number of clusters is unknown.

In this section we assume that all the samples are $[0,1]$-valued (that is, $X_i^j\in[0,1]$); extension to arbitrary bounded (multidimensional)
ranges is straightforward. Next we introduce {\em mixing coefficients}, mainly following \cite{Bosq:96} in formulations.
Informally, mixing coefficients of a stochastic process measure how fast the process forgets about its past.
Any one-way infinite stationary process $X_1,X_2,\dots$ can be extended backwards to make a two-way infinite process $\dots,X_{-1},X_0,X_1,\dots$
with the same distribution. In the definition below we assume such an extension.
Define the $\alpha$ 
mixing coefficients as
\begin{multline}\label{eq:alph}
 \alpha(n)=\sup_{A\in\sigma(\dots,X_{-1},X_0),B\in\sigma(X_{n},X_{n+1},\dots))}\\|P(A\cap B)-P(A)P(B)|,
\end{multline}
where $\sigma(..)$ stays for the sigma-algebra generated by random variables in brackets.
These coefficients are non-increasing. 
A process is called  strongly {\em $\alpha$-mixing} if $\alpha(n)\to0$.
Many important classes of processes satisfy the mixing conditions. For example, if a process is a 
 stationary  irreducible aperiodic Hidden Markov process, then it is $\alpha$-mixing.
 If the underlying Markov chain is finite-state, then the coefficients decrease 
exponentially fast. Other probabilistic assumptions  can be used to obtain bounds on the mixing coefficients,
see e.g. \cite{Bradley:05} and references therein.


{\bf Algorithm~2} is very simple. Its inputs are: samples $\x_1,\dots,x_N$;  the threshold level $\delta\in(0,1)$, the parameters $m,l\in\N$, $B^{m,l,n}$.
The algorithm assigns to the same cluster all samples which are at most $\delta$-far
from each other, as measured by  $\check d(\cdot,\cdot)$. 
The estimate $\check d(\cdot,\cdot)$ can be calculated in the same way as $\hat d(\cdot,\cdot)$ (see Proposition~\ref{th:cd} and its proof).
We do not give a pseudo code implementation of this algorithm, since it's rather obvious.

The idea is that the threshold level $\delta$ is selected according to the minimal  length of a sample and the (known bounds on) mixing 
rates of the process $\rho$ generating the samples (see Theorem~\ref{th:cons3}). 
%
%

The next theorem shows that, if the joint distribution of the samples satisfies $\alpha(n)\le\alpha_n\to0$, where 
$\alpha_n$ are known, then one can select (based on $\alpha_n$ only) the parameters of Algorithm~2 in such a way that it is weakly asymptotically consistent. 
Moreover, a bound on the probability of error before asymptotic is provided.
\begin{theorem}[Algorithm~2 is consistent, unknown $k$]\label{th:cons3}
Fix sequences $\alpha_n\in(0,1)$, $m_n,l_n,b_n\in\N$, and let  $B^{m,l,n}\subset B^{m,l}$ be an increasing sequence of finite sets, for each $m,l\in\N$.
 Set $b_n:=\max_{l\le l_n,m\le m_n}|B^{m,l,n}|$.
Let also  $\delta_n\in(0,1)$. Let $N\in\N$ and suppose that the samples $\x_1,\dots,\x_N$ are generated in such a way that the (unknown) joint distribution
$\rho$ is stationary ergodic, 
and satisfies $\alpha_n(\rho)\le\alpha_n$, for all $n\in\N$. 
Then for every sequence $q_n\in[0..n/2]$,  Algorithm~2, with the above parameters, satisfies
\begin{equation}\label{eq:alg2}
 \rho(T\ne I)\le  2N(N+1)(m_nl_nb_n\gamma_n(\delta_n)+\gamma_n(\epsilon_\rho)) 
\end{equation}
where 
$$
  \gamma(\delta)=(2e^{-q_n\delta^2/32}+11(1+4/\delta)^{1/2}q_n\alpha_{(n-2m_n)/2q_n}),
$$
 $T$ is the partition output by the algorithm, $I$ is the target clustering, $\epsilon_\rho$ is a constant
that depends only on $\rho$, and $n=\min_{i=1..N}n_i$.

In particular, if $\alpha_n=o(1)$, then, selecting the parameters  in such a way that 
$\delta_n= o(1)$, $q_n,m_n,l_n,b_n=o(n)$, $q_n,m_n,l_n\to\infty$, $\cup_{k\in\N}B^{m,l,k}=B^{m,l}$,  $b^{m,l}_n\to\infty$, for all $m,l\in\N$, and, finally,
$$
m_n l_n b_n(e^{-q_n\delta^2_n}+\delta_n^{-1/2}q_n\alpha_{(n-2m_n)/2q_n})=o(1),
$$
as is always possible, {\em Algorithm~2 is weakly asymptotically consistent} (with the number of clusters $k$ unknown). 
The computational complexity of Algorithm~2 is $O(N^2m_{n_{\max}}l_{n_{\max}}b_{n_{\max}})$, and is bounded by $O(N^2n_{\max}^2\log s^{-1})$, where
$n_{\max}$ and $\log s^{-1}_{\min}$ are defined as in Theorem~\ref{th:cons}.
\end{theorem}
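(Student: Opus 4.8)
The plan is to reduce the event $T\ne I$ to pairwise comparisons of samples and then control each comparison by a concentration inequality for $\alpha$-mixing sequences. Since Algorithm~2 assigns two samples to the same cluster exactly when $\check d(\cdot,\cdot)\le\delta_n$, the output partition $T$ coincides with the target $I$ as soon as $\check d(\x_i,\x_j)\le\delta_n$ for every pair lying in the same target cluster and $\check d(\x_i,\x_j)>\delta_n$ for every pair lying in different target clusters. Hence a union bound over the at most $N(N+1)/2$ pairs reduces the whole problem to bounding two kinds of single--pair error probabilities, which is the source of the factor $N(N+1)$.

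First I would set $\epsilon_\rho:=\min_{a\ne b}d(\rho_a,\rho_b)$ over all pairs of distinct generating distributions; by the triangle inequality for $d$ and the convergence of the truncated distance established in Lemma~\ref{th:td}, for $n$ large the truncated ``true'' distance between two distinct distributions stays bounded below by a fixed fraction of $\epsilon_\rho$. For a \emph{within-cluster} pair the limiting distance is $0$, so $\check d(\x_i,\x_j)>\delta_n$ forces the empirical frequencies $\nu(\x_i,B),\nu(\x_j,B)$ to deviate from their common probability on \emph{some} cell $B$; since the threshold $\delta_n$ must be beaten by the \emph{whole} weighted sum, all $m\le m_n$, $l\le l_n$ and $B\in B^{m,l,n}$ must be controlled simultaneously, and a union bound over these at most $m_nl_nb_n$ cells yields the term $m_nl_nb_n\gamma_n(\delta_n)$. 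For a \emph{between-cluster} pair the situation is asymmetric: because $d(\rho_i,\rho_j)\ge\epsilon_\rho$, there is a single fixed distinguishing cell $B^*$ on which $|\rho_i(B^*)-\rho_j(B^*)|$ is bounded below by a constant, and (for $n$ large enough that $B^*\in B^{m,l,n}$ and $\delta_n$ is small) the event $\check d(\x_i,\x_j)\le\delta_n$ forces one of the two frequencies on this single cell to deviate by about $\epsilon_\rho$. No union bound over cells is needed here, which is exactly why this contribution is only $\gamma_n(\epsilon_\rho)$.

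The key quantitative ingredient is therefore a deviation bound of the form $\rho(|\nu(\x,B)-\rho(B)|>\eta)\le\gamma_n(\eta)$, and this is where I expect the real work to lie. The indicators $Y_i:=I_{\{(X_i,\dots,X_{i+|B|-1})\in B\}}$ whose average is $\nu(\x,B)$ are bounded by $1$ but are \emph{not} a function of a single coordinate: each depends on a window of length $|B|\le m_n$, so consecutive indicators overlap. One must argue that $(Y_i)$ inherits $\alpha$-mixing from $\rho$ with coefficients shifted by the window length, so that Bosq's exponential inequality for bounded strongly mixing sequences \cite{Bosq:96} applies with $q_n$ blocks; this produces exactly the two summands of $\gamma$, the Hoeffding-type term $2e^{-q_n\eta^2/32}$ and the mixing-defect term $11(1+4/\eta)^{1/2}q_n\alpha_{(n-2m_n)/2q_n}$, the shift by $2m_n$ in the argument of $\alpha$ being precisely the correction for the window overlap. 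The main obstacle is thus the careful bookkeeping of this blocking argument --- verifying the inherited mixing rate, the block length $(n-2m_n)/2q_n$, and tracking how the per-cell threshold relates to the algorithm's threshold $\delta_n$ through the cell count.

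Collecting the two single-pair bounds and the union bound over pairs gives~(\ref{eq:alg2}). The ``in particular'' statement is then immediate: under the stated growth conditions each factor $m_nl_nb_n\gamma_n(\delta_n)$ and $\gamma_n(\epsilon_\rho)$ tends to $0$ (the first by the displayed $o(1)$ hypothesis, the second because $\epsilon_\rho$ is a positive constant while $q_n\to\infty$ and $\alpha_n\to0$), so $\rho(T\ne I)\to0$, which is weak asymptotic consistency. Finally, the complexity claim follows from Proposition~\ref{th:cd} together with the remark after Proposition~\ref{th:as2}: each of the $O(N^2)$ distances $\check d$ costs $O(m_{n_{\max}}l_{n_{\max}}b_{n_{\max}})$, which is in turn bounded by $O(n_{\max}^2\log s_{\min}^{-1})$.
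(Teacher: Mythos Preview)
Your proposal is correct and follows essentially the same route as the paper's proof: reduce $T\ne I$ to pairwise errors via a union bound over the $N(N+1)/2$ pairs, control within-cluster pairs by a union bound over the at most $m_nl_nb_n$ cells together with Bosq's exponential inequality for $\alpha$-mixing sequences (with the window-length correction giving $\alpha_{(n-2m_n)/2q_n}$), and control between-cluster pairs via a single distinguishing cell. The only cosmetic difference is that the paper defines $\epsilon_\rho$ directly as the minimum weighted single-cell gap $\min_{i,j:I(\x_i)\ne I(\x_j)}w_{m_{i,j}}w_{l_{i,j}}\tau_{i,j}/2$ rather than as $\min_{a\ne b}d(\rho_a,\rho_b)$, so no appeal to Lemma~\ref{th:td} is needed for the between-cluster step.
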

\begin{proof}
 We use the following bound from  \cite{Bosq:96}: for any zero-mean random 
process $Y_1,Y_2,\dots$, every $n\in\N$ and  every  $q\in[1..n/2]$ we have
\begin{multline*}
 P\left(|\sum_{i=1}^nY_i|>n\epsilon\right)\\\le 4\exp(-q\epsilon^2/8)+22(1+4/\epsilon)^{1/2}q\alpha(n/2q).
\end{multline*}
For every $j=1..N$, every $m<n$, $l\in\N$, and  $B\in B^{m,l}$,  define the processes $Y^j_1,Y^j_2,\dots$, where 
$$Y^j_t:=\mathbb I_{(X^j_t,\dots,X^j_{t+m-1})\in B} -\rho(X^j_{1..m}\in B).$$ It is easy to see that $\alpha$-mixing 
coefficients for this process satisfy $\alpha(n)\le \alpha_{n-2m}$. Thus, 
\begin{multline}\label{eq:lots}
 \rho(|\nu(X^j_{1..n_j},B)-\rho(X^j_{1..m}\in B)|>\epsilon/2)\le\gamma_n(\epsilon)
\end{multline}
Then for every $i,j\in[1..N]$ such that $I(\x_i)=I(\x_j)$ (that is, $\x_i$ and $\x_j$ are in the same cluster) we have
$$
\rho(|\nu(X^i_{1..n_i},B)-\nu(X^j_{1..n_j},B)|>\epsilon)\le 2\gamma_n(\epsilon).
$$
Using the union bound, summing over $m,l,$ and $B$, we obtain
\begin{equation}\label{eq:l2}
\rho(\check d(\x_i,\x_j)>\epsilon)\le 2m_nl_nb_n \gamma_n(\epsilon).
\end{equation}
Next, let $i,j$ be such that $I(\x_i)\ne I(\x_j)$. Then, for some $m_{i,j},l_{i,j}\in\N$ there is $B_{i,j}\in B^{m_{i,j},l_{i,j}}$ such 
that $|\rho(X^i_{1..|B_{i,j}|}\in B_{i,j}) - \rho(X^j_{1..|B_{i,j}|}\in B_{i,j})|>2\tau_{i,j}$ for some $\tau_{i,j}>0$.
Then for every $\epsilon<\tau_{i,j}/2$ we have
\begin{multline}\label{eq:ll2}
\rho(|\nu(X^i_{1..n_i},B_{i,j})-\nu(X^j_{1..n_j},B_{i,j})|<\epsilon)\le \\
 \rho(|\nu(X^i_{1..n_i},B_{i,j})- \rho(X^i_{1..|B|}\in B_{i,j})|>\tau_{i,j}) \\+\rho(|\nu(X^j_{1..n_j},B_{i,j})- \rho(X^j_{1..|B_{i,j}|}\in B_{i,j})|>\tau_{i,j})
\\\le 2\gamma_n(\tau_{i,j}).
\end{multline}
Moreover, for $\epsilon< w_{m_{i,j}}w_{l_{i,j}}\tau_{i,j}/2$
\begin{equation}\label{eq:l3}
  \rho(\check d(\x_i,\x_j)>\epsilon)\le 2\gamma_n(w_{m_{i,j}}w_{l_{i,j}}\tau_{i,j}).
\end{equation}
Define
$
 \epsilon_\rho:=\min_{i,j=1..N: I(\x_i)\ne I(\x_j)}w_{m_{i,j}}w_{l_{i,j}}\tau_{i,j}/2.
$
Clearly, from 
this  and~(\ref{eq:ll2}), for every $\epsilon<2\epsilon_\rho$ we obtain
\begin{equation}\label{eq:l4}
  \rho(\check d(\x_i,\x_j)>\epsilon)\le 2\gamma_n(\epsilon_\rho).
\end{equation}

If, for every pair $i,j$ of samples, $\check d(\x_i,\x_j)<\delta_n$ if and only if $I(\x_i)=I(\x_j)$, then 
Algorithm~2 gives a correct answer. Therefore, taking the bounds~(\ref{eq:l2}) and~(\ref{eq:l4}) 
together for each of the $N(N+1)/2$ pairs of samples, we obtain~(\ref{eq:alg2}). The complexity statement can be established
analogously  to that in~Theorem~\ref{th:cons}.
\end{proof}

While Theorem~\ref{th:cons3} shows that $\alpha$-mixing with a  known bound on the coefficients 
is sufficient to achieve asymptotic consistency, the bound~(\ref{eq:alg2}) on the 
probability of error includes as multiplicative terms all the parameters $m_n$, $l_n$ and $b_n$ of the algorithm,
which can make it large for practically useful choices of the parameters. 
The multiplicative factors are due to the fact that we take a bound on the divergence of  each individual 
frequency of each cell of each partition from its expectation, and then take a union 
bound over all of these. To obtain a more realistic performance guarantee, we would like to have a bound 
on the divergence of {\em all} the frequencies of all cells of a given partition from their expectations.
Such uniform divergence estimates are possible under stronger assumptions; namely, they can be established
under some  assumptions on $\beta$-mixing coefficients, which are defined as follows 
\begin{multline*}
 \beta(n)=\E\sup_{B\in\sigma(X_{n},\dots))} |P(B)-P(B|\sigma(\dots,X_0))|.
\end{multline*}
These coefficients satisfy $2\alpha(n)\le \beta(n)$ (see e.g. \cite{Bosq:96}), so assumptions on the speed of decrease of $\beta$-coefficients are stronger.
Using the uniform bounds given in \cite{Karandikar:02}, one can obtain 
a statement similar
  to that in Theorem~\ref{th:cons3}, with $\alpha$-mixing replaced
by $\beta$-mixing, and without the multiplicative factor $b_n$.

\section{Conclusion}
We have proposed a framework for defining consistency of clustering algorithms, when 
the data comes as a set of samples drawn from stationary processes. The main advantage of 
this framework is its generality: no assumptions have to be made on the distribution
of the data, beyond stationarity and ergodicity. The proposed notion of consistency is
so simple and natural, that it may be suggested to be used as a basic sanity-check for all 
clustering algorithms that are used on sequence-like data. For example, it is  easy to see that the $k$-means algorithm 
will be consistent  with some initializations (e.g. with the one used in Algorithm~\ref{alg:1}) but not 
with others  (e.g. not with the random one). 

While the algorithms that we presented to demonstrate the existence of consistent clustering 
methods are computationally efficient and easy to implement, the main value of the established
results is theoretical. 
 As it was mentioned in the introduction, it can be suggested that
 for practical applications empirical estimates of the distributional distance can be replaced
 with distances based on data compression, in the spirit of \cite{BRyabko:06a, Cilibrasi:05, BRyabko:09}.

Another direction for future research concerns optimal bounds on the speed of convergence: while we show 
that such bounds can be obtained (of course, only in the case of known mixing rates), finding practical 
and tight bounds, for different notions  of mixing rates, remains open. 

Finally, here  we have only considered the setting in which the number $N$ of samples is fixed, while 
the asymptotic is with respect to the lengths of the samples. For on-line clustering problems,
it would be interesting to consider the formulation where both $N$   and the lengths of the samples grow.


\end{document}